\documentclass[10pt,twocolumn,letterpaper]{article}
\usepackage{wrapfig}
\usepackage{cvpr}              

\usepackage{bbm}
\usepackage{amsmath,amsfonts} 
\usepackage[linesnumbered,ruled,vlined]{algorithm2e} 
\usepackage{graphicx}
\usepackage{textcomp}
\usepackage{amssymb}
\usepackage{amsthm}
\usepackage{xcolor}
\usepackage{multirow}
\usepackage{array}
\usepackage{makecell}
\usepackage{subcaption} 
\usepackage{placeins}
\usepackage[font=small, skip = 4pt]{caption}
\newtheorem{lemma}{Lemma}
\definecolor{cvprblue}{rgb}{0.21,0.49,0.74}
\usepackage[pagebackref,breaklinks,colorlinks,allcolors=cvprblue]{hyperref}

\def\paperID{*****} 
\def\confName{CVPR}
\def\confYear{2026}

\title{DANCE: Dynamic 3D CNN Pruning: Joint Frame, Channel, and Feature Adaptation for Energy Efficiency on the Edge}

\author{Mohamed Mejri \quad Ashiqur Rasul \quad Abhijit Chatterjee \\
School of Electrical and Computer Engineering, 
Georgia Tech, 
Atlanta, GA, USA\\
{\tt\small mmejri3@gatech.edu, arasul6@gatech.edu, abhijit.chatterjee@ece.gatech.edu}
}
\begin{document}

\maketitle

\begin{abstract}
Modern convolutional neural networks (CNNs) are workhorses for video and image processing, but fail to adapt to the computational complexity of input samples in a dynamic manner to minimize energy consumption. In this research, we propose \textit{DANCE}, a fine-grained, input-aware, dynamic pruning framework for 3D CNNs to maximize power efficiency with negligible to zero impact on performance. In the proposed two-step approach, the first step is called \textit{activation variability amplification (AVA)} and the 3D CNN model is retrained to increase the variance of the magnitude of neuron activations across the network in this step, facilitating pruning decisions across diverse CNN input scenarios.
In the second step, called \textit{adaptive activation pruning (AAP)}, a lightweight \textit{activation controller network} is trained to \textit{dynamically prune frames, channels and features} of 3D convolutional layers of the network (different for each layer), based on statistics of the outputs of the first layer of the network. 
Our method achieves substantial savings in multiply–accumulate (MAC) operations and memory accesses by introducing sparsity within convolutional layers. Hardware validation on the NVIDIA Jetson Nano GPU and the Qualcomm Snapdragon 8 Gen 1 platform demonstrates respective speedups of 1.37$\times$ and 2.22$\times$, achieving up to 1.47$\times$ higher energy efficiency compared to the state of the art.
 
\footnote{Code and experimental data of this research work will be published after acceptance of the article.}
\end{abstract}

\section{Introduction}
Convolutional neural networks (CNNs) are widely used for computer vision applications involving both 3D and 2D structures, such as for videos and images, respectively, for tasks such as gesture classification, action recognition, video caption generation, object detection and tracking, semantic segmentation and so on. 
However, convolutional neural networks are computationally expensive, with costs that grow with network depth and scaling from 2D (image \cite{russakovsky2015imagenet,tan2019efficientnet}) to 3D (video  or point cloud \cite{tran2015learning, point_cloud}) processing. To address this, structured pruning has emerged as a promising solution ~\cite{filters2016pruning,luo2017thinet,liu2017learning,he2017channel}, reducing both memory footprint and computational requirements. Pruning can also be applied dynamically, removing parameters adaptively depending on the input ~\cite{jha_Dyn,mathurmind,kaushik_CDL,skipnet,chen2019you,gao2018dynamic,hua2019channel}. Despite the fact that dynamic pruning has been applied mostly to 2D CNNs, where focus is largely on coarse strategies such as channel pruning or layer removal techniques, which, while effective, remain limited in granularity and overall scope of power efficiency.

In this work, we propose \textit{DANCE}, a two-step,  fine-grained, input-aware pruning framework for 3D convolutional neural networks that dynamically removes unnecessary frames, channels, and features inside intermediate 4D ($height \times width \times channel \times frame$) activation tensors of the hidden layers of the network. The first step, referred to as \textit{activation variability amplification (AVA)}, consists of training the network while increasing the variance of activation magnitudes across the frames, channels, and features inside the network. This variance amplification facilitates later pruning by reinforcing critical neurons and suppressing redundant neurons. The second step, \textit{adaptive activation pruning (AAP)}, introduces a lightweight \textit{activation controller network} that dynamically prunes frames, channels, and features corresponding to different hidden layers of the network for each input sample, based on thresholds applied to the aggregated outputs (representing statistics) of the first layer of the network.
This enables the 3D CNN model to skip unnecessary neurons dynamically, allowing each input sample to trigger a tailored pruning pattern across the frames, channels, and features of each hidden layer.  Computational savings arise from skipping operations tied to pruned activations,
greatly reducing MAC operations and memory accesses. Additionally, removing noisy or irrelevant activations can enhance accuracy by filtering out noisy or misleading features. \textit{The sparsity of channels and feature activations varies both from frame to frame and across different convolutional layers in a 3D CNN}. The key contributions of this paper are:

\noindent (1) We propose a novel \emph{input-aware adaptive fine-grained pruning framework} for 3D convolutional neural networks, capable of dynamically pruning frames, channels, and features in 4D activation tensors of the hidden layers of the network, on a per-layer basis, based on statistics of the outputs of the first layer of the network.

\noindent (2) A \emph{two-step adaptation process is used}: (1) \textbf{Activation variability amplification (AVA)}, which increases activation variance across the 4D activation tensor dimensions, without degrading network performance, 
and (2) \textbf{Adaptive activation pruning (AAP)}, which employs a lightweight network to predict input-aware pruning thresholds for frame, channel, and feature pruning, allowing each input sample to trigger a tailored pruning pattern.

\noindent (3)The proposed method introduces structured sparsity to reduce compute and memory overhead while maintaining (and sometimes improving) accuracy compared to prior work. We demonstrate the versatility of our approach on an NVIDIA Jetson Nano GPU and a Snapdragon 8 Gen 1 (Samsung S22) mobile CPU. By leveraging custom Neon SIMD kernels, we have achieved computational speedups of \textbf{1.37$\times$} and \textbf{2.22$\times$}, respectively, with up to \textbf{1.47$\times$} higher energy efficiency over optimized baselines.

\vspace{-4pt}
\section{Related Works}
To develop input adaptive neural networks with dynamic computational graphs, researchers have approached the problem from three angles: sample-wise adaptive, spatially adaptive, and temporally adaptive. Sample-wise adaptive models adapt their computational graphs according to the complexity of individual input instances, whereas spatially adaptive and temporally adaptive neural networks modify and optimize themselves along spatial and temporal dimensions. Layer and channel skipping\cite{jha_Dyn, skipnet, kaushik_CDL} is a popular technique to avoid unnecessary computation in neural networks, and this method involves adopting gating or halting mechanisms using an auxiliary policy network. BlockDrop\cite{blockdrop} suggested a policy gradient reinforcement learning (REINFORCE)\cite{reinforce, reinforce_1, reinforce_2} algorithm for block selection within residual neural network architectures. The authors of Conditional Deep Learning \cite{kaushik_CDL} propose to add linear classification heads at the end of each convolution block, creating a cascaded architecture for early exit neural networks. Class-aware channel pruning \cite{crisp, classawarepruning} methods dynamically reduce the model computational burden by selecting the important kernels based on class-aware saliency scores. Furthermore, a unified static and dynamic pruning framework has been suggested in the works of \cite{bilevel_pruning} with differentiable gating and channel selection mechanism using the Gumbel \cite{gumbel_1, gumbel_2} sigmoid trick. Aforementioned optimization techniques can be extended to 3D convolutional neural networks for efficient processing of their relatively demanding computational task as well. Few research works have been conducted on static compression and pruning of 3D neural networks \cite{RT3D, ranp}, however, most of the investigations lack the essence of adaptation in their performance optimization scheme. Li et al \cite{2DNot2D} formulated an adaptive policy to optimize the use of 3D convolutional kernels where a reinforcement learning based selection network governs the usage of frame and convolutional filters. Additionally, Chen et al \cite{freq_compress} proposed a pruning technique on the basis of converting convolutional kernels in the temporal axis to frequency domain representations, reducing the computational complexity by removing insignificant frequency components.
\vspace{-4pt}

\section{Overview}
\label{over}
The framework for the proposed dynamic 3D CNN pruning approach is given in 
Figure \ref{overview_high}. Figure \ref{overview_high} (top) shows input frames (video) to our 
system that are processed by the first convolutional layer (3D Conv 1) of the network. The outputs of the first layer  are passed to an \textit{Adaptive Activation Pruning (AAP) Controller} module. The AAP controller generates control signals for the active pruning of frames, channels, and features of all subsequent 3D convolutional layers in the network, as described below.

\begin{figure}[!ht]
    \centering
    \includegraphics[width=\linewidth]{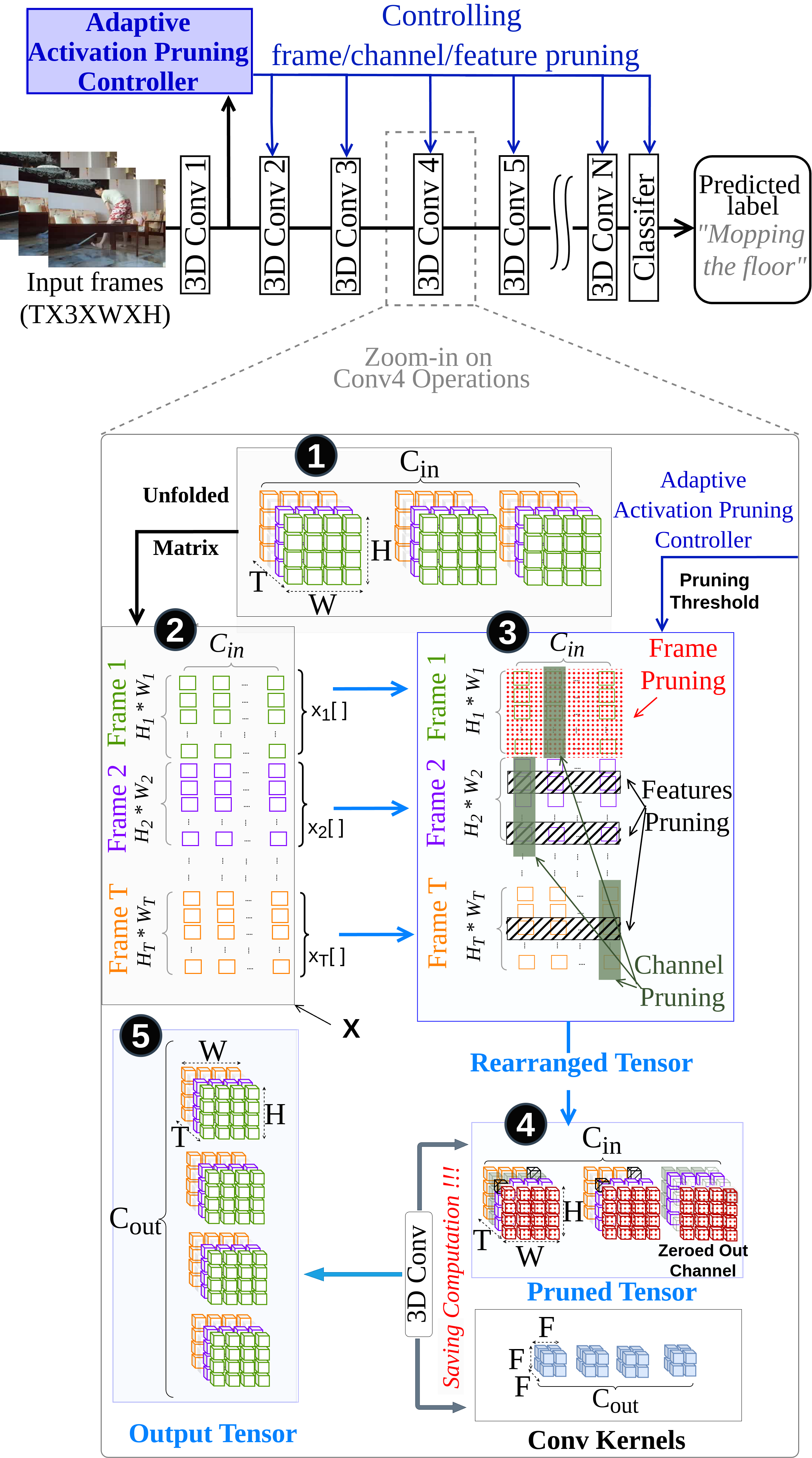}
    \caption{Illustration of 4D tensor pruning in intermediate convolution layers. Sparsity in the activation tensor is introduced at three levels by dynamic pruning: for frames (dotted in red), channels (shaded in green), and features (lined in black and white).}
    \label{overview_high}
\end{figure}

An expanded view of the ``3D Conv 4'' layer is shown in Figure \ref{overview_high}, serving as the reference structure for all subsequent layers throughout the network. The inputs to this layer consist of $C_{in}$ channels, with each $H \times W$ feature map corresponding to the respective channel filter, replicated over \textit{T} temporal frames of video input, forming a 4D tensor as illustrated in box 1. Box (2) of Figure \ref{overview_high} shows the unfolded ``flattened'' representation (matrix) of the 4D tensor above. The rows and columns of this matrix are modulated by the AAP controller as shown in box (3) of Figure \ref{overview_high}. Frame pruning eliminates one or more complete frames of the matrix (${H_i} \times {W_i}$ rows, $1 \leq i \leq T$, box (3) top). 
Channel pruning eliminates complete columns of the matrix, while feature pruning eliminates corresponding weights of feature maps across all the channels of the network corresponding to different time frames (different for different time frames). The rearranged tensor is shown in box (4) of Figure \ref{overview_high}.
The pruned tensor is subsequently convolved with the kernels of the current layer, 3D Conv 4, to produce the output, shown as the 4D tensor in box (5), which serves as the input to the next layer in the network.
Computational savings are achieved during convolution by skipping the operations associated with pruned frames, channels, and features. Based on the framework described in Figure \ref{overview_high} the dynamic pruning approach proceeds in two steps.

\begin{figure*}[!ht]
    \centering
    \includegraphics[width=0.9\textwidth]{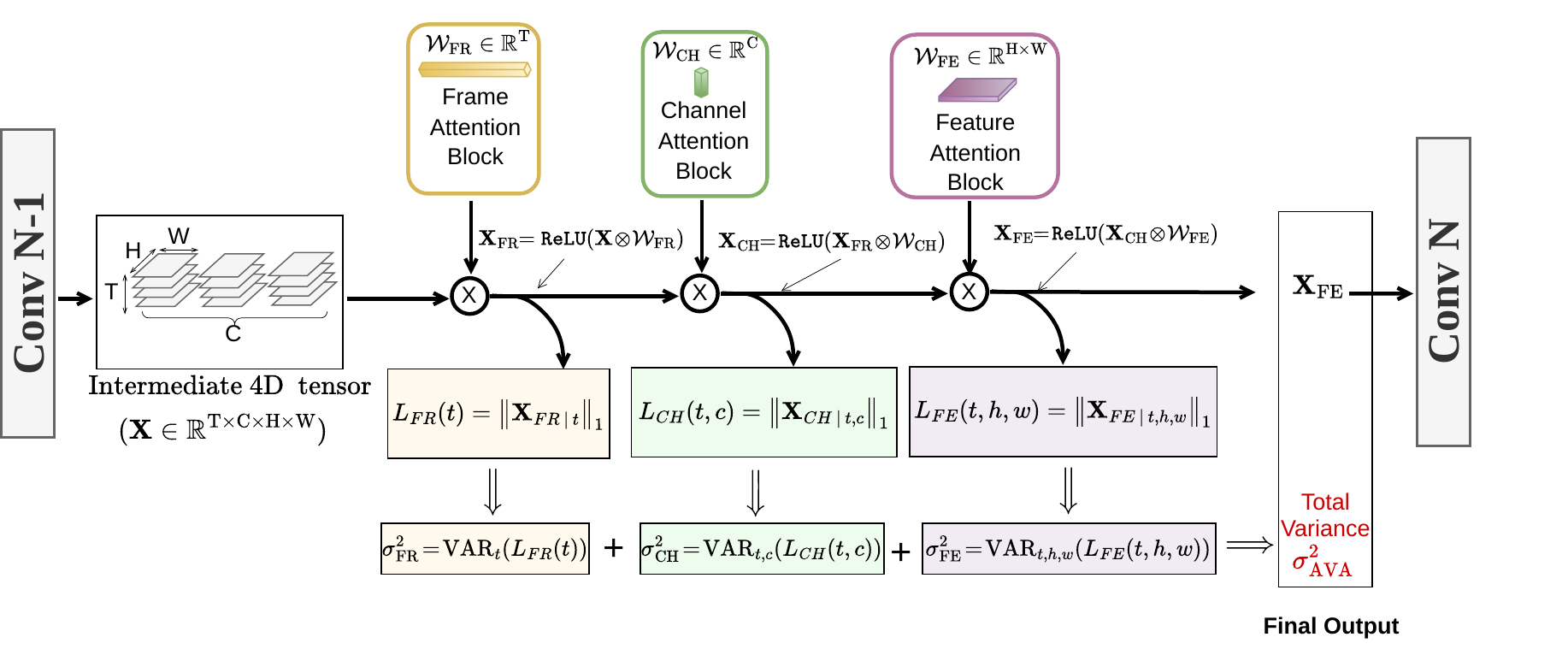}
    \caption{Overview of the Activation Variability Amplication (AVA) mechanism. Variance along the frame ($L_{FR}$), channel ($L_{CH}$) and feature ($L_{FE}$) dimensions are aggregated to determine the total variance $\sigma^2_{AVA}$, which is used as parameter in the loss function to train the model for boosting variance within activations.}
    \label{fig:AVA_overview}
\end{figure*}

\noindent \textit{Step (1)}: To facilitate dynamic pruning it is necessary to amplify the effects of diverse network inputs (video frames across time, images) on the outputs of all the layers of the 3D convolution network of Figure \ref{overview_high}. A novel 3D CNN training method called \textit{activation variability amplification (AVA)} is used to increase the variance of the distributions of the magnitudes of neuron activations across frames, channels, and features of the layers of the network of Figure \ref{overview_high}. After model training, the post-AVA magnitude distribution exhibits higher variation across all corresponding  4D tensor dimensions. This induced variation enables clearer separation between relevant and irrelevant frames, channels, and features for more effective dynamic pruning. 

\noindent \textit{Step (2)}:  The outputs of the first layer of the network are passed to an adaptive activation pruning (AAP) controller module. This eliminates
frames, channels, and features of individual layers of the network based on processing of the outputs of the first layer and a generated threshold for each pruned element (inspired by the component retention criterion in Principal Component Analysis (PCA)\cite{PCA_1}, adapted here to operate on activation magnitudes as described in Step 1). It is worth mentioning that, only the parameters involved in the adaptive activation pruning controller are learned in the second step (refer to figure \ref{overview_high}), while other parameters are kept frozen.
In the following, Steps (1) and (2) are discussed in Sections \ref{sec:AVA} and \ref{sec:AAP}, respectively.
\vspace{-6pt}

\section{Activation Variability Amplification (AVA)}
\label{sec:AVA}
Figure~\ref{fig:AVA_overview} describes the proposed \textbf{adaptive variability amplification (AVA)} training procedure of Step 1 discussed in Section \ref{over}. Inspired by the Convolutional Block Attention Module (CBAM)~\cite{10.1007/978-3-030-01234-2_1}, which enhances the representational power of 2D convolutional neural networks through channel and features attention, the AVA module extends this concept by applying three hierarchical attention modules: frame, channel, and feature based attention. Unlike CBAM, our approach serves a different purpose: to quantify and adaptively re-weight neuron activation variability across the frame, channel, and feature dimensions. These attention modules help reweighting frames, channels and features within the activation tensor which helps increasing their magnitude variability.

\noindent\textit{Frame Attention Module: }
Consider the ``flattened'' 4D tensor $X$ shown in Box (2) of Figure \ref{overview_high}. This is also interchangeably represented as a matrix $X$[], with sub-matrices [$x_1$[], $x_2$[], .... $x_T[$]. Each sub-matrix
$x_t$[j,k] is of dimension $H_t$*$W_t$ by $C_{in}$  (rows by columns) corresponding to each frame $t$, $1 \leq t \leq T$, as shown in the figure. Consider a trainable vector $W_{FR}$ of dimension $T$, ${W_{FR}}$ = [${w_{FR}}(1)$, ${w_{FR}}(2)$,... ${w_{FR}}(T)]$. If we assume that $H_t$ is equal to a fixed height $H$ and $W_t$ is equal to a fixed width $W$ for all $1 \leq t \leq T$, we compute: 
\begin{equation}
L_{\text{FR}}(t) = \frac{\left|{w_{FR}}(t)\right|}{C \cdot H \cdot W}
\sum_{j=1}^{H*W} \sum_{k=1}^{C} \left| {x_t}[j,k] \right|
\label{eq:one}
\end{equation}
The variance of $L_{\text{FR}}$(t) over all $1 \leq t \leq T$ is denoted as
$\sigma_{\text{FR}}^{2}$. Maximizing $\sigma_{\text{FR}}^{2}$ during training helps in determining the relevance of different frames relative to others on a per-input basis. The sub-matrices $x_t$[],
$1 \leq\ t \leq T$ are weighted by the scalars ${w_{FR}} (t)$ (represented in tensor form as $X_{FR} = \texttt{ReLU}(X \times {W_{FR}})$ in Figure \ref{fig:AVA_overview}) and passed on to the channel attention module.  

\noindent\textit{Channel Attention Module: } 
The weighted matrices   ${x'}_t$[j,k] = $\texttt{ReLU}({w_{FR}}(t)$ * $x_t$[j,k]), $1 \leq t \leq T$, 
are forwarded to a channel attention module, where channel-wise modulation is performed using the trainable vector ${W_{CH}} = [{w_{CH}}(1), {w_{CH}}(2),... {w_{CH}}(C_{in})]$. A frame-dependent, channel-wise quantity 
$\mathbf{L}_{\text{CH}}$(t,c), $1 \leq t \leq T$, $1 \leq c \leq {C_{in}}$, is computed as:
\begin{equation}
L_{\text{CH}}(t,c) = \frac{\left|{w_{CH}}(c)\right|}{H \cdot W}
\sum_{j=1}^{H*W} \left| {{x'}_t}[j,c] \right|
\label{eq:two}
\end{equation}
The variance of  $L_{\text{CH}}(t,c)$ over all $1 \leq t \leq T$, $1 \leq c \leq {C_{in}}$, is computed as $\sigma_{\text{CH}}^{2}$. Maximizing $\sigma_{\text{CH}}^{2}$ during training helps in determining the relevance of different channels relative to others on a per-input basis.
The tensor $X_{CH}$ depicted by the columns of the sub-matrices ${x'}_t$[] (representing channels), weighted by the scalars ${w_{CH}} (c)$, $1 \leq t \leq T$, $1 \leq c \leq {C_{in}}$, given as
$X_{CH} = \texttt{ReLU}({X_{FR}} \times {W_{CH}})$  in Figure \ref{fig:AVA_overview} are passed on to the feature attention module. 

\noindent\textit{Feature Attention Module: } 
The columns of the sub-matrices ${x'}_t$[j,k] above, are weighted by the quantities $w_{CH}(k)$,  $1 \leq k \leq {C_{in}}$, to yield the sub-matrices ${x^*}_t$[j,k]. These are forwarded to a feature attention module where
feature-wise modulation is performed using the trainable parameter $\mathcal{W}_{\text{FE}} = [{w_{FE}}(1), {w_{FE}}(2),... {w_{FE}}(H*W)]$. 
A frame-dependent feature-based quantity $\mathbf{L}_{\text{FE}}$ is derived as follows:
\begin{equation}
L_{\text{FE}}(t,f) = \frac{\left|{w_{FE}}(f)\right|}{C}
\sum_{c=1}^{C}
\left| {{x^*}_t}[f,c] \right|,
\label{eq:three}
\end{equation}
where $f$ is a row index for the sub-matrix ${x^*}_t$[], $1 \leq f \leq {H*W}$.
The variance of  $L_{\text{FE}}(t,f)$ over all $1 \leq t \leq T$, $1 \leq f \leq {H*W}$, is computed as $\sigma_{\text{FE}}^{2}$. Maximizing $\sigma_{\text{FE}}^{2}$ during training helps in determining the relevance of different features relative to others on a per-input basis.
The tensor $X_{FE}$ depicted by the rows of the sub-matrices ${x^*}_t$[] (representing features), weighted by the scalars ${w_{FE}} (f)$, $1 \leq t \leq T$, $1 \leq f \leq {H*W}$, given as
$X_{FE} = \texttt{ReLU}({X_{CH}} \times {W_{FE}})$  in Figure \ref{fig:AVA_overview} are passed on to further convolutional layers of the network and for computation of the objective function for network training.  

The 
frame, channel and feature based variances referred as $\sigma^2_{\text{FR}}$, $\sigma^2_{\text{CH}}$ and $\sigma^2_{\text{FE}}$ respectively are aggregated to generate the total variance denoted $\sigma^{2}_{\text{AVA}}$. The total variances from all the AVA modules inside the 3D CNN are summed together to generate the final variance referred to as $\sigma^{2}_{f}$.  The 3D CNN is then trained to jointly minimize the cross-entropy loss $\mathcal{L}_{CE}$ between predicted and ground-truth labels and maximize the aggregated variance $\sigma^2_{\text{AVA}}$. The overall objective function is defined as:
\[
\mathcal{L}_{f} = \mathcal{L}_{CE} - \beta \, {\sigma^2_{f}}
\]    
where $\beta$ is a small positive weighting factor that regulates the contribution of the variance term, ensuring that the optimization primarily focuses on reducing classification error while increasing feature variability.

The choice of the standard deviation also serves the purpose of sparsity maximization. 
In our setting, the quantities $L_{\text{CH}}(t,c)$, $L_{\text{FE}}(t,f)$, and 
$L_{\text{FR}}(t)$ are first normalized with respect to the $\ell_1$ norm, 
i.e., their entries are nonnegative and rescaled to sum to one. 
Under this normalization, Lemma~\ref{lemma_1} shows that maximizing the standard deviation is equivalent 
to maximizing sparsity in the sense of the Hoyer measure~\cite{yang2019deephoyer}, 
defined for a vector $\mathbf{x} \in \mathbb{R}^D$ as
\[
H(\mathbf{x})=
\frac{\sqrt{D} - \dfrac{\|\mathbf{x}\|_1}{\|\mathbf{x}\|_2}}
{\sqrt{D} - 1}.
\]
\begin{lemma}\label{lemma_1}
On the probability simplex $\Delta^{D-1}$, the standard deviation $\operatorname{std}(\mathbf{x})$ and the Hoyer measure $H(\mathbf{x})$ are both strictly increasing functions of $\|\mathbf{x}\|_2$. Consequently, $\arg\max_{\mathbf{x} \in \Delta^{D-1}} \operatorname{std}(\mathbf{x}) = \arg\max_{\mathbf{x} \in \Delta^{D-1}} H(\mathbf{x})$.
\end{lemma}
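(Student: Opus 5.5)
The plan is to write both quantities explicitly as functions of $\|\mathbf{x}\|_2$ alone, using that on $\Delta^{D-1}$ every vector has nonnegative entries with $\sum_i x_i = 1$, so $\|\mathbf{x}\|_1 = 1$ and the mean is $\bar{x} = 1/D$.

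For the standard deviation, I will use the population definition and expand the squared deviations:
\[
\operatorname{std}(\mathbf{x})^2 = \frac{1}{D}\sum_{i=1}^D \left(x_i - \tfrac{1}{D}\right)^2 = \frac{1}{D}\|\mathbf{x}\|_2^2 - \frac{1}{D^2}.
\]
The radicand is nonnegative on the simplex, since Cauchy--Schwarz gives $\|\mathbf{x}\|_2^2 \ge 1/D$. The map $s \mapsto \sqrt{s^2/D - 1/D^2}$ is strictly increasing for $s \ge 1/\sqrt{D}$, so $\operatorname{std}$ is a strictly increasing function of $\|\mathbf{x}\|_2$. If the sample variance with $D-1$ is used instead, only a positive constant factor changes and the conclusion is the same.

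For the Hoyer measure, substituting $\|\mathbf{x}\|_1 = 1$ gives
\[
H(\mathbf{x}) = \frac{\sqrt{D} - 1/\|\mathbf{x}\|_2}{\sqrt{D}-1}.
\]
This is strictly increasing in $\|\mathbf{x}\|_2 > 0$ because $\sqrt{D}-1>0$ for $D \ge 2$. The case $D=1$ is degenerate: the simplex is a single point and $H$ is undefined, so I will assume $D\ge 2$.

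Both functions have the form $\operatorname{std} = g(\|\mathbf{x}\|_2)$ and $H = h(\|\mathbf{x}\|_2)$ with $g,h$ strictly increasing on the attainable range $[1/\sqrt{D},1]$. Hence for any $\mathbf{x},\mathbf{y}$ in the simplex,
\[
\operatorname{std}(\mathbf{x}) \ge \operatorname{std}(\mathbf{y}) \iff \|\mathbf{x}\|_2 \ge \|\mathbf{y}\|_2 \iff H(\mathbf{x}) \ge H(\mathbf{y}).
\]
It follows that both argmax sets equal $\arg\max_{\Delta^{D-1}} \|\mathbf{x}\|_2$. I will also note what this set is: $\|\mathbf{x}\|_2^2 \le \|\mathbf{x}\|_\infty \|\mathbf{x}\|_1 \le 1$, with equality exactly at the vertices $e_i$. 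So the maximizers are the one-hot vectors, the maximally sparse points, which supports the paper's sparsity interpretation.

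There is no real obstacle here. The one point needing care is to check that the domain of $\|\mathbf{x}\|_2$ over the simplex, namely $[1/\sqrt{D},1]$, keeps the square root in $g$ real and $h$ well defined. The Cauchy--Schwarz bound above settles this.
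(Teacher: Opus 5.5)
Your proposal is correct and follows essentially the same route as the paper. Both arguments use $\|\mathbf{x}\|_1 = 1$ to write $\operatorname{std}(\mathbf{x})^2 = \frac{1}{D}\|\mathbf{x}\|_2^2 - \frac{1}{D^2}$ and $H(\mathbf{x}) = (\sqrt{D} - 1/\|\mathbf{x}\|_2)/(\sqrt{D}-1)$, then conclude from monotonicity in $\|\mathbf{x}\|_2$; your extra checks are sound refinements the paper omits (the attainable range $[1/\sqrt{D},1]$, the assumption $D \ge 2$, and that the maximizers are the vertices $e_i$).
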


\begin{proof}
Since $\|\mathbf{x}\|_1 = 1$, then $\bar{x} = 1/D$ and $\operatorname{std}(\mathbf{x})^2 = \frac{1}{D}\|\mathbf{x}\|_2^2 - \frac{1}{D^2}$, which is strictly increasing in $\|\mathbf{x}\|_2$. Similarly, for $\mathbf{x} \in \Delta^{D-1}$, the Hoyer measure simplifies to $H(\mathbf{x}) = \frac{\sqrt{D} - 1/\|\mathbf{x}\|_2}{\sqrt{D}-1}$. Because the term $-1/\|\mathbf{x}\|_2$ is strictly increasing in $\|\mathbf{x}\|_2$, $H(\mathbf{x})$ is also a strictly increasing function of $\|\mathbf{x}\|_2$. Since both objectives are monotonic transformations of the same norm, they share identical maximizers.
\end{proof}

\begin{figure}[!ht]
    \centering
    \includegraphics[width=\linewidth]{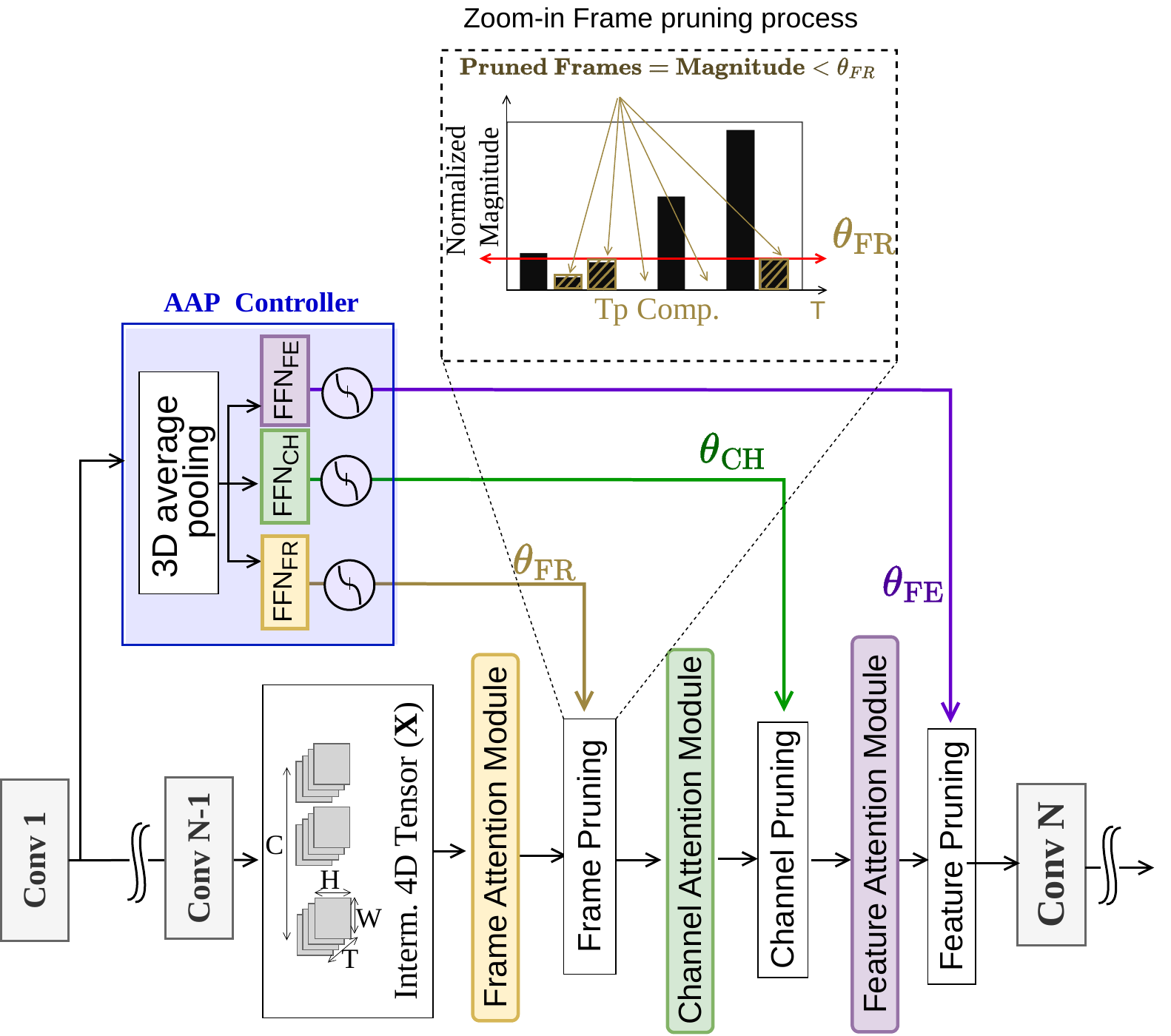}
    \caption{Adaptive Activation Pruning (AAP): The AAP function is applied sequentially on frames, channels, and features (depicted from top to bottom) to produce a structured sparsity pattern in the activation tensor fed into the subsequent convolutional layer.}
    \label{fig:AAP}
\end{figure}

\section{Adaptive Activation Pruning (AAP)}
\label{sec:AAP}

Adaptive activation pruning (AAP) introduces sparsity into the activation tensors of intermediate convolutional layers of the network based on precomputed thresholds. Following the variability amplification phase, high variance within activation tensor dimensions stimulates large magnitude differences among frames, channels, and features, which in turn facilitates elimination of unnecessary frames, channels, and features (those with low magnitude) from the computation graph and following the step, the \textit{weights of the 3D convolutional network are frozen and only the parameters of the AAP module are trained}.
Figure~\ref{fig:AAP} illustrates the processing involved in the Adaptive Activation Pruning (AAP) module, and proceeds in three consecutive steps. 

The output of the $1^{st}$ convolutional layer of the 3D CNN serves as input to the AAP controller, which utilizes 3D average pooling followed by three FFNs ($\text{FFN}_{\text{FR}}$, $\text{FFN}_{\text{CH}}$, $\text{FFN}_{\text{FE}}$) and sigmoid activations to produce global pruning thresholds $\theta_{\text{FR}}$, $\theta_{\text{CH}}$, and $\theta_{\text{FE}}$. These thresholds are shared across all layers and operate within $[0, 1]$ due to the normalization of the importance metrics ($L$). While a global threshold may be less efficient than layer-wise adaptation, it avoids the linear overhead of per-layer modules, preserving the net computational benefit. Pruning is executed as follows: frames $t$ are suppressed if $L_{FR}(t) \leq \theta_{\text{FR}}$ (Eq. \ref{eq:one}), channels if $L_{CH}(t,c) \leq \theta_{\text{CH}}$ (Eq. \ref{eq:two}), and features if $L_{FE}(t,c) \leq \theta_{\text{FE}}$ (Eq. \ref{eq:three}).

As illustrated in Figure~\ref{fig:AAP}, frame, channel, and feature pruning are applied after each corresponding attention module to the intermediate 4D tensor ($\mathbf{X}$) using the retention thresholds described above.


Since the model has already been trained with the AVA module, retraining the entire network while pruning is unnecessary and could even reduce variability in intermediate activations. Therefore, the 3D CNN weights are kept frozen, and only the AAP controller network is trained to produce higher retention thresholds while maintaining classification performance.

However, training the AAP controller network jointly with adaptive activation pruning poses a challenge, as pruning involves non-differentiable binary mask generation. To address the non-differentiability issue, we adopt the Straight-Through Estimator (STE) technique~\cite{bengio2013estimating}, which applies a hard (non-differentiable) operation in the forward pass and a soft counterpart in the backward pass, allowing gradients to flow through the 3D CNN. For the binary mask generation, we employ a soft gating operation inspired by the Gumbel-Softmax technique \cite{gumbel_1, gumbel_2, gumbelsoftmaxselectivenetworks}, as shown in Equation~\ref{eq:gate}:
\begin{equation}\label{eq:gate}
y =
\begin{cases}
\mathbbm{1}\!\left[\dfrac{x - \theta + n}{\tau} \ge 0\right], 
& \text{(forward pass)}, \\[8pt]
\sigma\!\left(\dfrac{x - \theta + n}{\tau}\right), 
& \text{(backward pass)}.
\end{cases}
\end{equation}
\begin{figure}[ht]
    \centering
    \includegraphics[width=0.7\columnwidth]{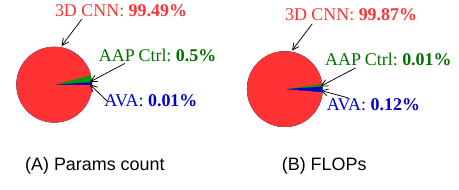}
    \caption{Overhead of the 3D CNN, AVA modules, and AAP controller}
    \label{fig:repartition_overhead}
\end{figure}
\begin{figure*}[!ht]
\centering
\begin{subfigure}{0.3\textwidth}
    \centering
    \includegraphics[width=\linewidth]{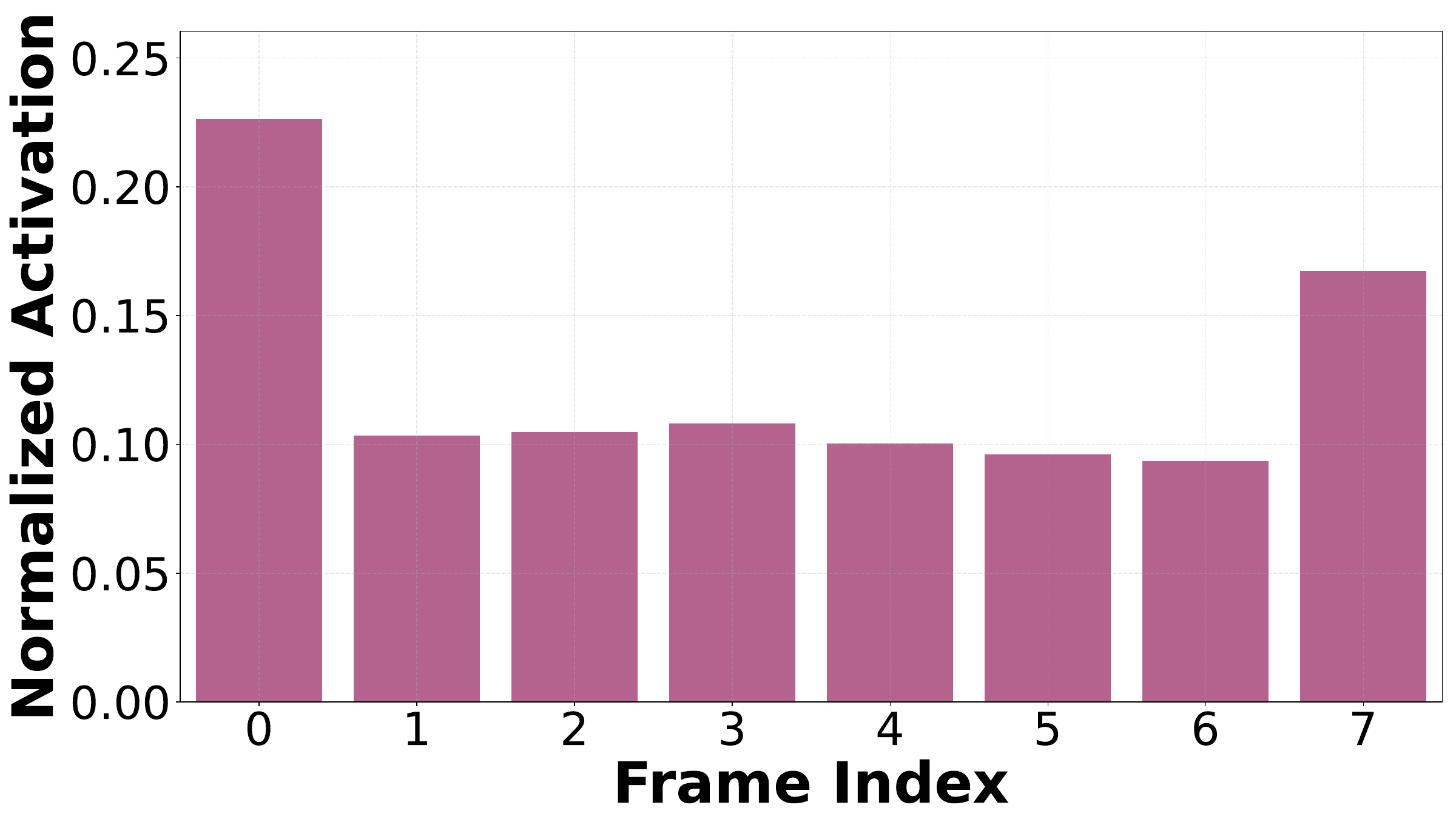}
    \caption{}
\end{subfigure}%
\hfill
\begin{subfigure}{0.3\textwidth}
    \centering
    \includegraphics[width=\linewidth]{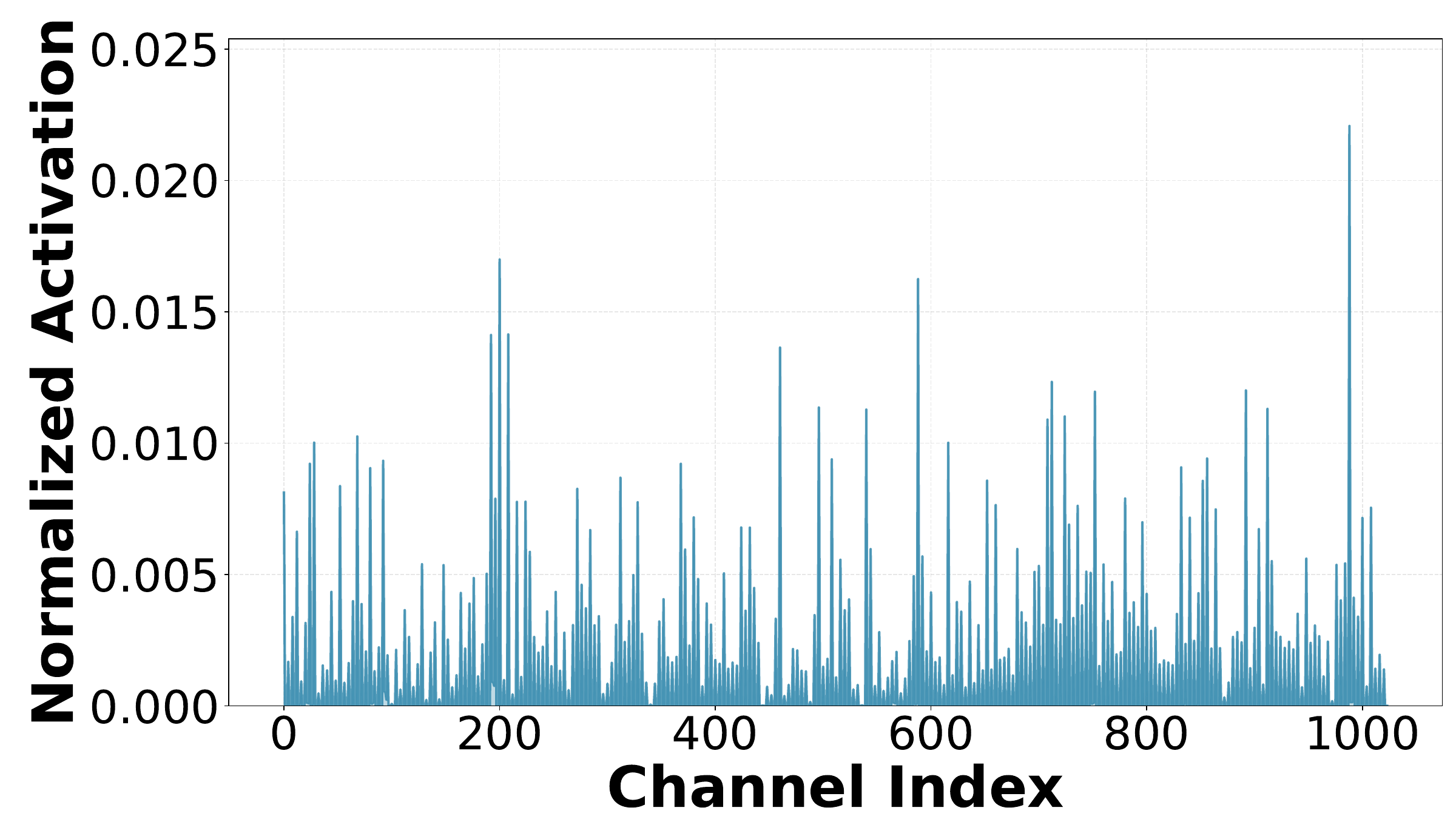}
    \caption{}
\end{subfigure}%
\hfill
\begin{subfigure}{0.3\textwidth}
    \centering
    \includegraphics[width=\linewidth]{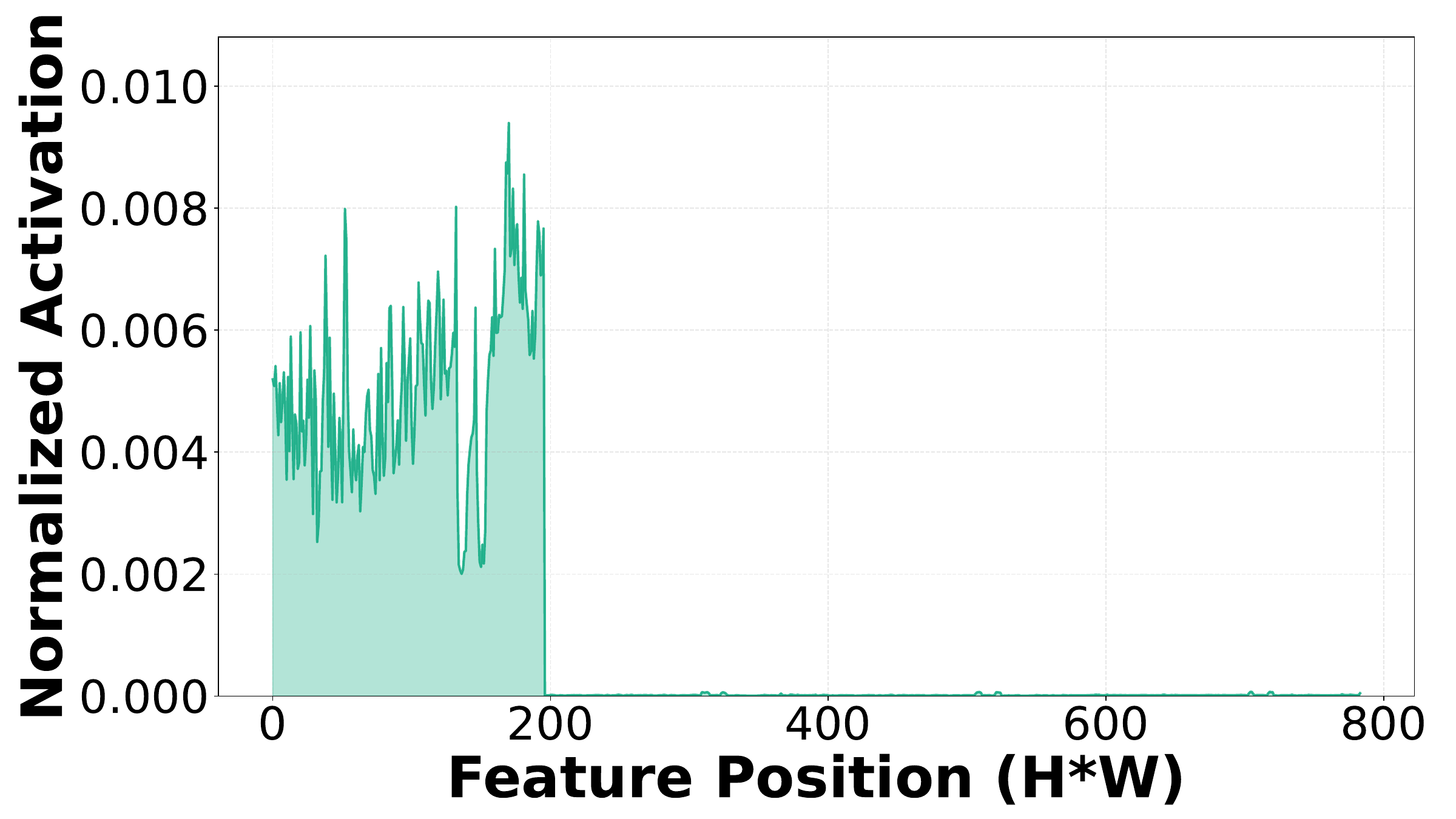}
    \caption{}
\end{subfigure}

\vspace{-2pt} 

\begin{subfigure}{0.3\textwidth}
    \centering
    \includegraphics[width=\linewidth]{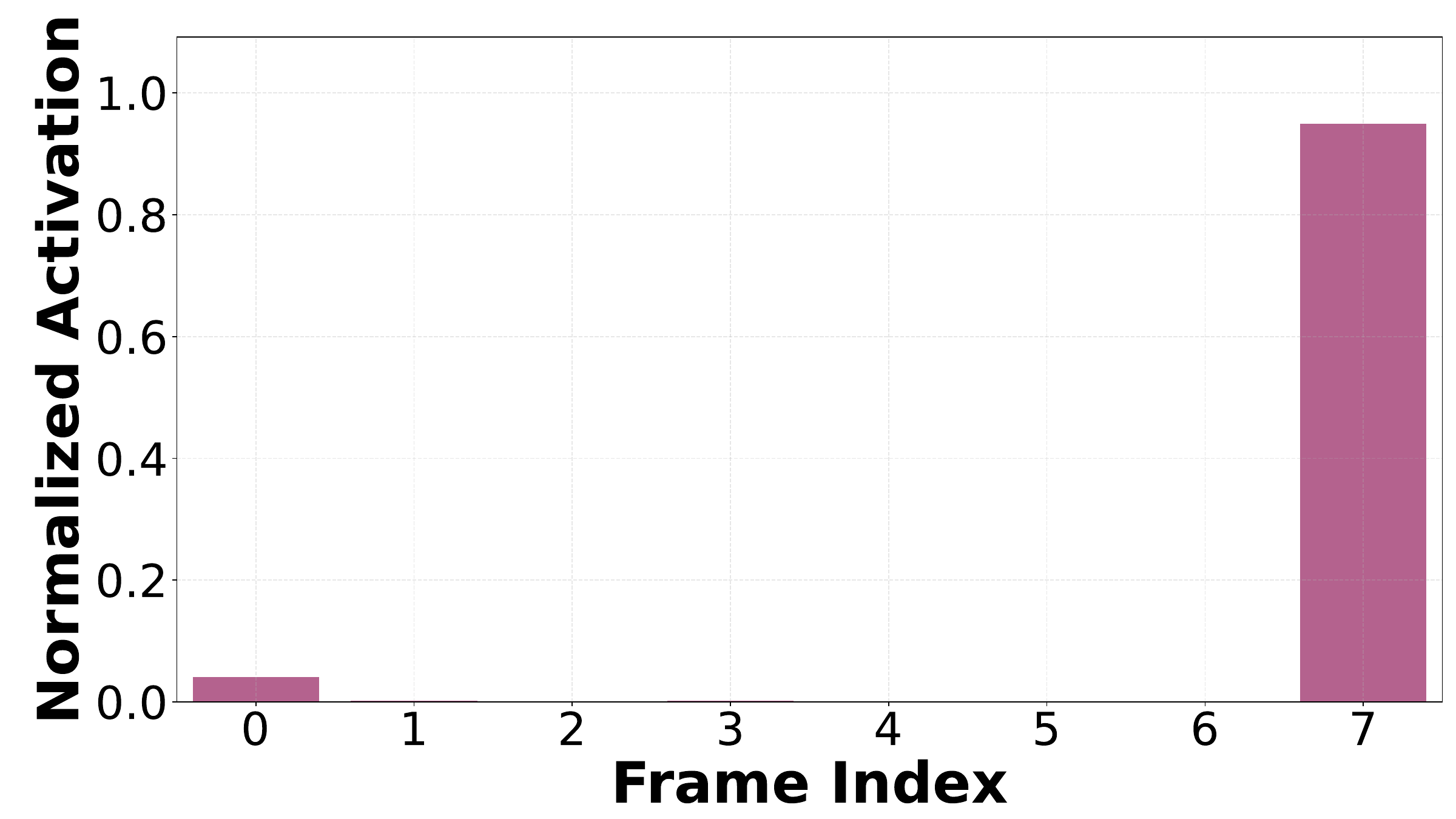}
    \caption{}
\end{subfigure}%
\hfill
\begin{subfigure}{0.3\textwidth}
    \centering
    \includegraphics[width=\linewidth]{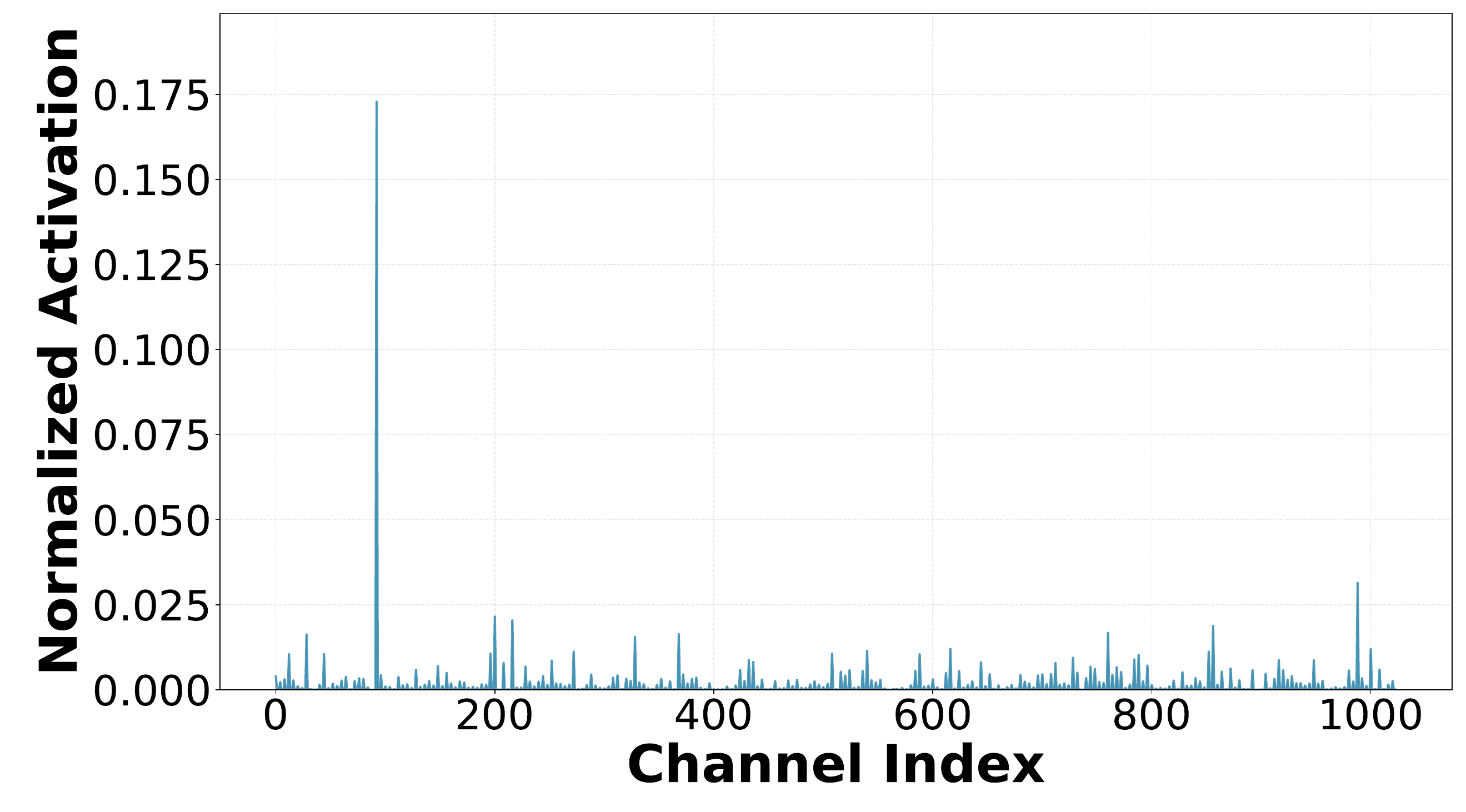}
    \caption{}
\end{subfigure}%
\hfill
\begin{subfigure}{0.3\textwidth}
    \centering
    \includegraphics[width=\linewidth]{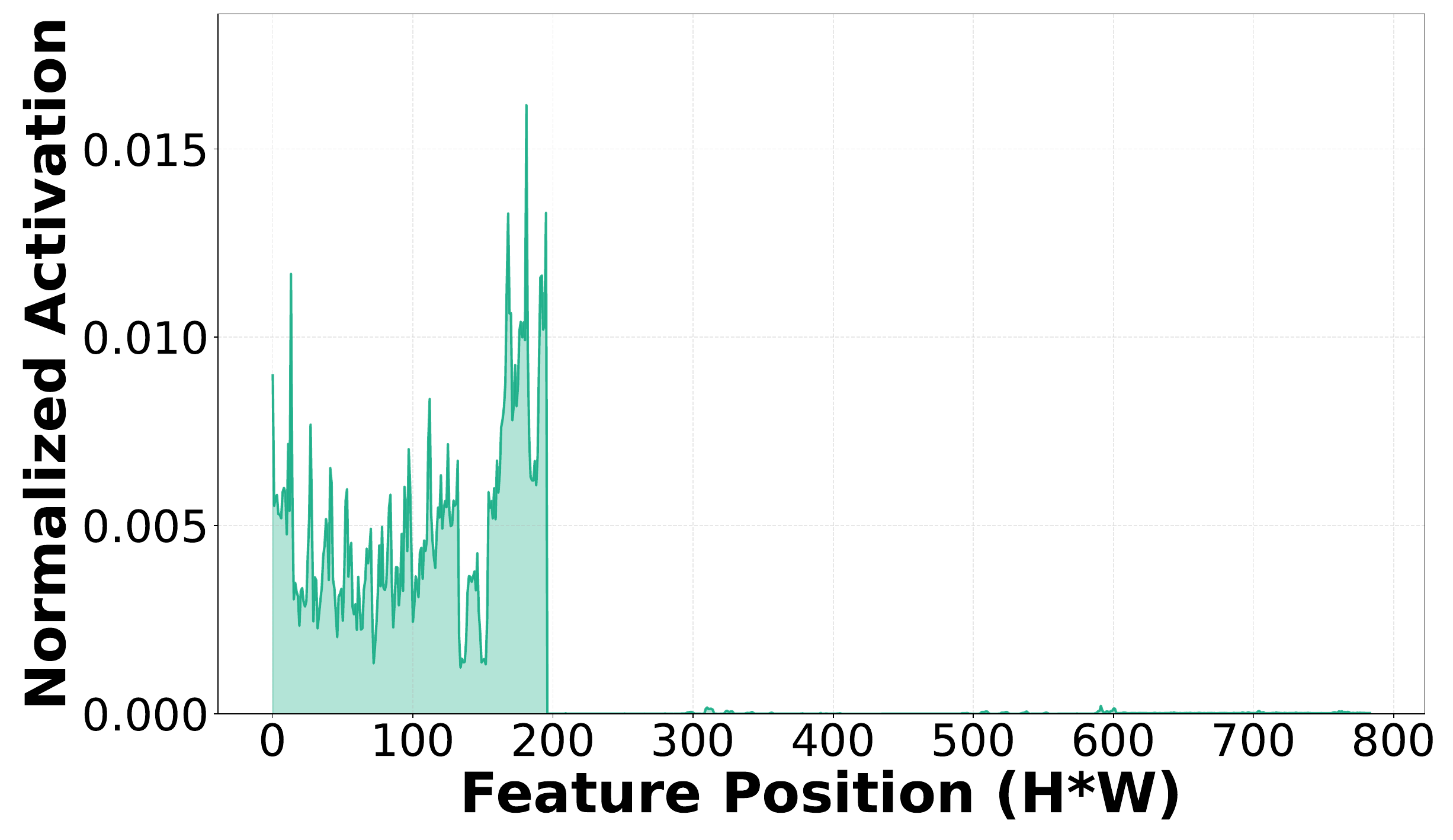}
    \caption{}
\end{subfigure}

\caption{Visualization of frame, channel, and feature magnitude distribution before and after AVA}
\label{fig:twobythree}
\end{figure*}
In Equation~\ref{eq:gate}, $\sigma$ represents the sigmoid function, $x$ denotes the normalized magnitude of frames, channels, or pixels; $\theta$ represents the retention threshold, and $n \sim \mathcal{N}(0, \epsilon)$ is a Gaussian noise term with zero mean and small variance $\epsilon$, introduced to smooth the threshold boundary. This stochasticity enables more stable gradient flow near decision boundaries. The output $y$ populates the binary mask with 1 if the normalized magnitude is above the threshold and zero otherwise.    
The AAP controller network is trained to adaptively select the highest possible retention thresholds, resulting in higher pruning rates while maintaining accuracy. The final objective loss is thus defined as:
\[
\mathcal{L}_f = \mathcal{L}_{CE} - \lambda \cdot \sum_{\theta \in \text{AAP Modules}} (\theta_{\text{FR}} + \theta_{\text{CH}} + \theta_{\text{FE}})
\]
where $\lambda$ is a small hyperparameter controlling the regularization strength.

\section{Experimental Results}
We first present simulation results for computational efficiency and final model accuracy after pruning on a baseline dataset, and compare the same against state-of-the-art methods. Next, we conduct an ablation study to assess (1) the importance of the AVA step and (2) the effect of retraining the full network together with the AAP controller on accuracy and pruning ratio. We then report the overhead introduced by the AAP controller and AVA modules relative to the 3D CNN model. Subsequently, we describe the pruning ratios for individual components; frames, channels, and features, and provide visual profiles of their magnitude distributions before and after the AVA procedure. We also include per-layer FLOP profiles before and after pruning. Finally, we show the speedup and energy efficiency achieved on validation hardware.

\begin{table}[!ht]
\centering
\footnotesize
\resizebox{\columnwidth}{!}{
\begin{tabular}{c c c c c c}
\toprule
Model & \makecell{Sparsity \\ Scheme} & \makecell{Pruning \\ Rate} & \makecell{Baseline \\ Acc. (\%)} & \makecell{Top-1 \\ Acc. (\%)} & $\Delta$ Acc. (\%)\\
\midrule
\multirow{5}{*}{R(2+1)D\cite{R(2+1)D}}
& Filter      & 2.6$\times$   & \multirow{4}{*}{94.5} & 90.5   & -4.00 \\
& Vanilla     & 2.6$\times$   &                        & 91.7   & -2.80 \\
& KGS\cite{RT3D}         & 3.2$\times$   &                        & 92.0   & -2.50 \\
& KGRC        & 3.02$\times$  &                        & 92.65  & -1.85 \\
\cmidrule(lr){2-6}
& \textit{Our method}
              & \textbf{4.0$\times$} 
              & 91.5 & \textbf{93.05} & \textbf{+1.55} \\
\midrule
\multirow{9}{*}{C3D\cite{C3D}}
& DCP\cite{DCP}         & 1.99$\times$  & \multirow{5}{*}{82.77} & 71.72 & -11.05 \\
& FP\cite{FP}          & 2.02$\times$  &                        & 75.44 & -7.33 \\
& TP\cite{TP}          & 2.02$\times$  &                        & 70.00 & -12.77 \\
& MDP\cite{MDP}         & 2.00$\times$  &                        & 76.82 & -5.95 \\
& MDP+KD      & 2.00$\times$  &                        & 80.10 & -2.67 \\
\cmidrule(lr){2-6}
& KGR          & 1.76$\times$  & \multirow{3}{*}{82.82} & 81.84 & -0.98 \\
& KGC          & 1.93$\times$  &                        & 81.39 & -1.43 \\
& KGRC
              & 3.04$\times$
              &                & 80.21 & -2.61 \\
\cmidrule(lr){2-6}
& \textit{Our method}
              & \textbf{3.49}$\times$ 
              & 87.9 & \textbf{87.93} & \textbf{+0.03} \\
\bottomrule
\end{tabular}
}
\caption{Comparison of pruning methods on UCF101.}
\label{tab:pruning_comparison}
\vspace{-2pt}
\end{table}

\begin{table}[t]
\centering
\footnotesize
\setlength{\tabcolsep}{6pt} 
\begin{tabular}{c c c c}
\toprule
Sparsity Scheme & \makecell{Pruning \\ Rate } & \makecell{Accuracy \\ (\%)} & $\Delta$ Acc. (\%) \\
\midrule
DCP\cite{DCP}      & 2.02$\times$ & 40.59 & -7.6 \\
FP\cite{FP}       & 2.02$\times$ & 40.98 & -7.2\\
TP\cite{TP}       & 2.11$\times$ & 34.84 & -13.3\\
MDP\cite{MDP}      & 2.14$\times$ & 43.20 & -4.9\\
MDP+KD   & 2.14$\times$ & 45.62 & -2.6\\
        \textit{Our method} & \textbf{2.77$\times$} & \textbf{46.17} & \textbf{-2.02} \\
\bottomrule
\end{tabular}
\caption{Performance comparison of model compression approaches on HMDB51 (Split 1), using pruning ratio and accuracy improvements over the C3D baseline.}
\label{tab:hmdb51_pruning_comparison}
\vspace{-10pt}
\end{table}

Tables \ref{tab:pruning_comparison} and \ref{tab:hmdb51_pruning_comparison} compare respectively, our method against state-of-the-art (SOTA) pruning schemes on UCF101~\cite{UCF101} and HMDB51~\cite{HMDB} using the R(2+1)D~\cite{R(2+1)D} and C3D~\cite{C3D} models pretrained on Kinetics~\cite{kinetics}, reporting pruning rates, baseline accuracies, and post-pruning top-1 accuracies. Across all settings, our dynamic, fine-grained pruning consistently outperforms existing static approaches. On UCF101 with R(2+1)D~\cite{R(2+1)D}, we achieve a 4× pruning rate compared to the best SOTA technique, (which achieves 3.02×), while improving accuracy by 1.5\%, whereas all static baselines incur accuracy degradation. For UCF101 with C3D~\cite{C3D}, our method provides a 3.49× pruning rate—higher than the best SOTA at 3.04×, with no accuracy drop. On HMDB51 with C3D, we again surpass all prior work with a 2.77× pruning rate versus 2.14×, and only a minimal 2\% accuracy reduction. Overall, the tables show that dynamic pruning coupled with fine-grained sparsification yields more optimal compression and superior accuracy preservation than the coarse-grained static strategies used in current state-of-the-art methods.
The higher baseline accuracies (91.5\% for $R(2+1)D$ and 87.9\% for $C3D$) result from the AVA module acting as a feature enhancer that strengthens representations by suppressing redundant activations. This dual-purpose design facilitates fine-grained pruning while simultaneously boosting model robustness, enabling superior compression rates without the accuracy degradation typical of SOTA methods.


\begin{table}[ht]
\centering
\footnotesize
\setlength{\tabcolsep}{8pt}
\resizebox{\linewidth}{!}{
\begin{tabular}{c c c c c c c}
\toprule
& \multicolumn{3}{c}{UCF101} & \multicolumn{3}{c}{HMDB51} \\
\cmidrule(lr){2-4} \cmidrule(lr){5-7}
Metric & w/ AVA & w/o AVA & \makecell{AAP \\ Co-train} & w/ AVA & w/o AVA & \makecell{AAP \\ Co-train} \\
\midrule
\makecell{Pruning rate} & 3.49 & 1.1 & 2.86 & 2.77 & 1.2 & 2.32 \\
Accuracy     & 87.93 & 85.04 & 86.7 & 46.17 & 44.5 & 45.57 \\
\bottomrule
\end{tabular}
}
\caption{Ablation study on pruning rate and accuracy for UCF101 and HMDB51 with and without the AVA step.}
\label{tab:ava_comparison}
\vspace{-10pt}
\end{table}

\begin{table}[ht]
\centering
\footnotesize
\setlength{\tabcolsep}{8pt} 

\resizebox{0.8\linewidth}{!}{%
\begin{tabular}{c c c c c}
\toprule
Dataset & \makecell{Frame \\ only} & \makecell{Channel \\ only} & \makecell{Feature \\ only} & Combined \\
\midrule
UCF101  & 1.03$\times$ & 2.47$\times$ & 1.50$\times$ & 3.49$\times$ \\
HMDB51  & 1.35$\times$ & 2.62$\times$ & 1.39$\times$ & 2.77$\times$ \\
\bottomrule
\end{tabular}%
}

\caption{\small Pruning-ratio distributions (frame, channel, feature, combined) for C3D on UCF101 and HMDB51.}
\label{tab:pruning_ratios}
\vspace{-10pt}
\end{table}
Table \ref{tab:ava_comparison} shows that AVA is essential; without it, uniform activation magnitudes prevent the AAP controller from establishing effective pruning thresholds. Furthermore, co-training the CNN and AAP controller disrupts the structured variability required for pruning, decreasing accuracy without providing additional overhead reduction.

Figure \ref{fig:repartition_overhead} further shows that both AVA and the AAP controller add negligible memory and computation relative to the main CNN, confirming that these components are necessary and introduce minimal overhead.  

Table \ref{tab:pruning_ratios} shows how frame, channel, and feature pruning individually contribute to pruning the full C3D model on UCF101 and HMDB51 datasets. In both cases, channels are the sparsest component and provide over twice the overhead reduction of frames and features. Features exceed frames only in activation pruning, as removing a full frame would significantly degrade CNN accuracy.  

As depicted in figure \ref{fig:twobythree}, variance is boosted along the frame, channel, and feature dimensions of the C3D model by suppressing the redundant neurons, and the sub-figures show the contrast in normalized activation magnitude at the fourth convolutional layer of the C3D model for a sample input. From our experiments, it has been observed that the adaptive variability amplification framework is more effective along the frame or channel dimension, compared to the feature. Training the model to increase variance within the activation tensor, therefore, facilitates the elimination of redundant neurons, which in turn, promotes sparsity when passed through the Adaptive Activation Pruning module. 

\begin{figure}[!h]
    \centering
    \includegraphics[width=0.8\linewidth]{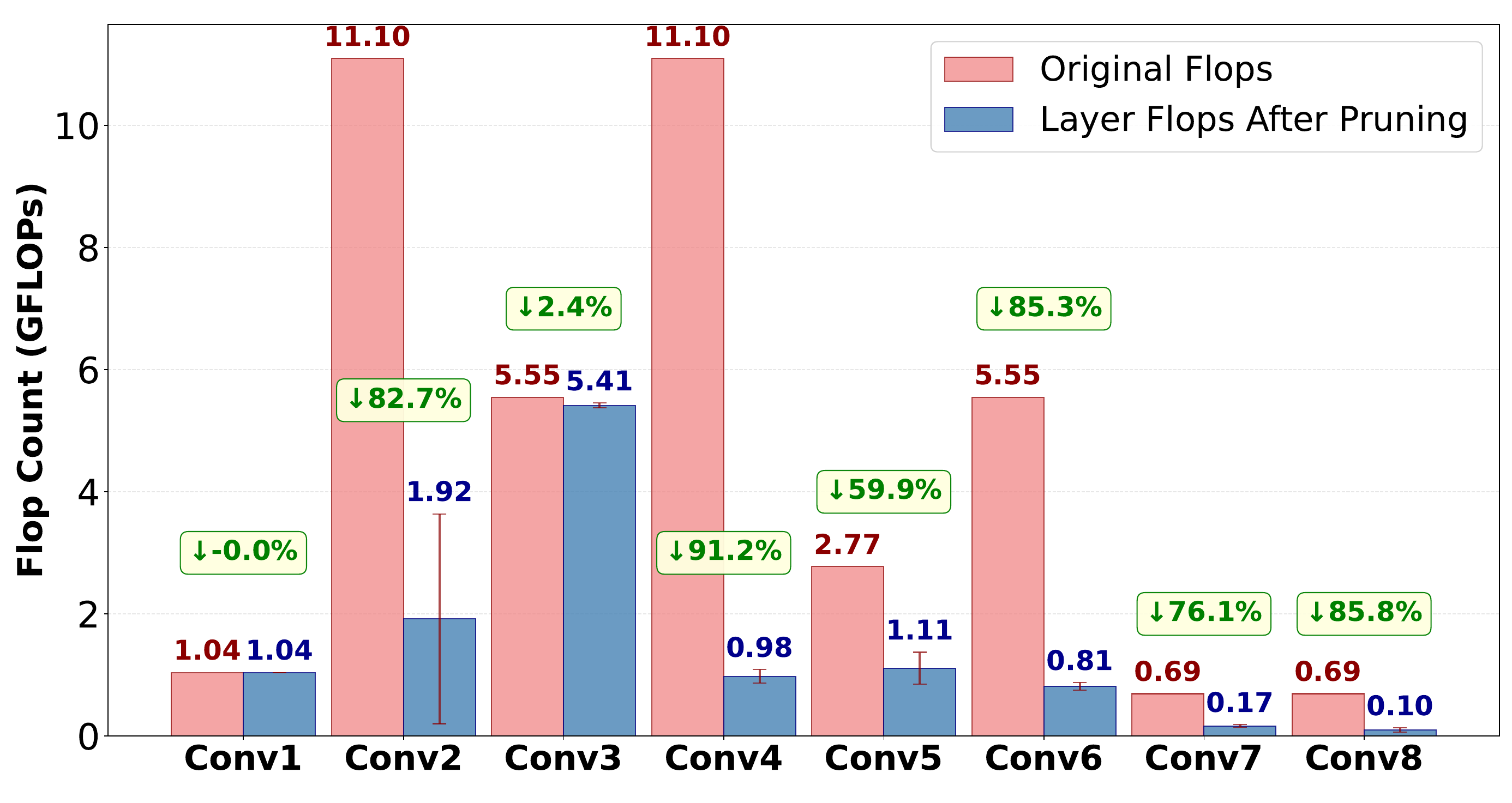}
    \caption{FLOPs reduction per conv. layer in the C3D model.}
    \label{fig:flop_comparison}
\end{figure}
Figure \ref{fig:flop_comparison} shows the average FLOP reduction across the C3D convolutional layers. Red bars indicate dense-layer FLOPs, while blue bars show FLOPs after introducing activation sparsity. The first layer shows no reduction because its output is needed by the AAP controller. From the second layer onward, sparsity has a clear impact. The red vertical lines denote each layer’s FLOP range, with Conv2 showing the greatest variation. Conv2 and Conv4 also achieve the largest pruning gains due to their high original computational overhead.
\begin{figure}
    \centering
    \begin{subfigure}{0.4\linewidth}
        \centering
        \includegraphics[width=\linewidth]{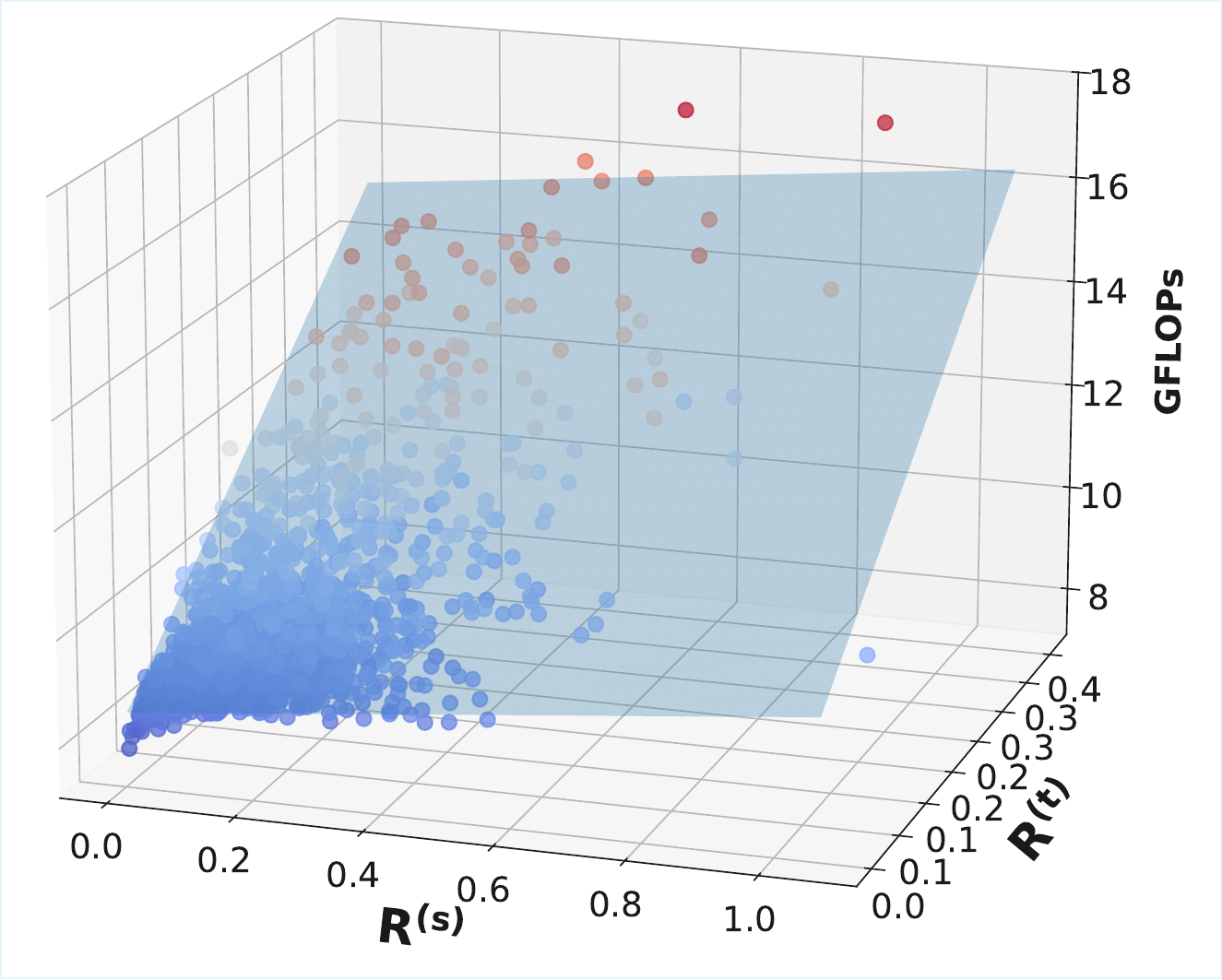}
        \caption{C3D ($R^2 = 0.745$)}
        \label{fig:hist1}
    \end{subfigure}
    \hfill
    \begin{subfigure}{0.4\linewidth}
        \centering
        \includegraphics[width=\linewidth]{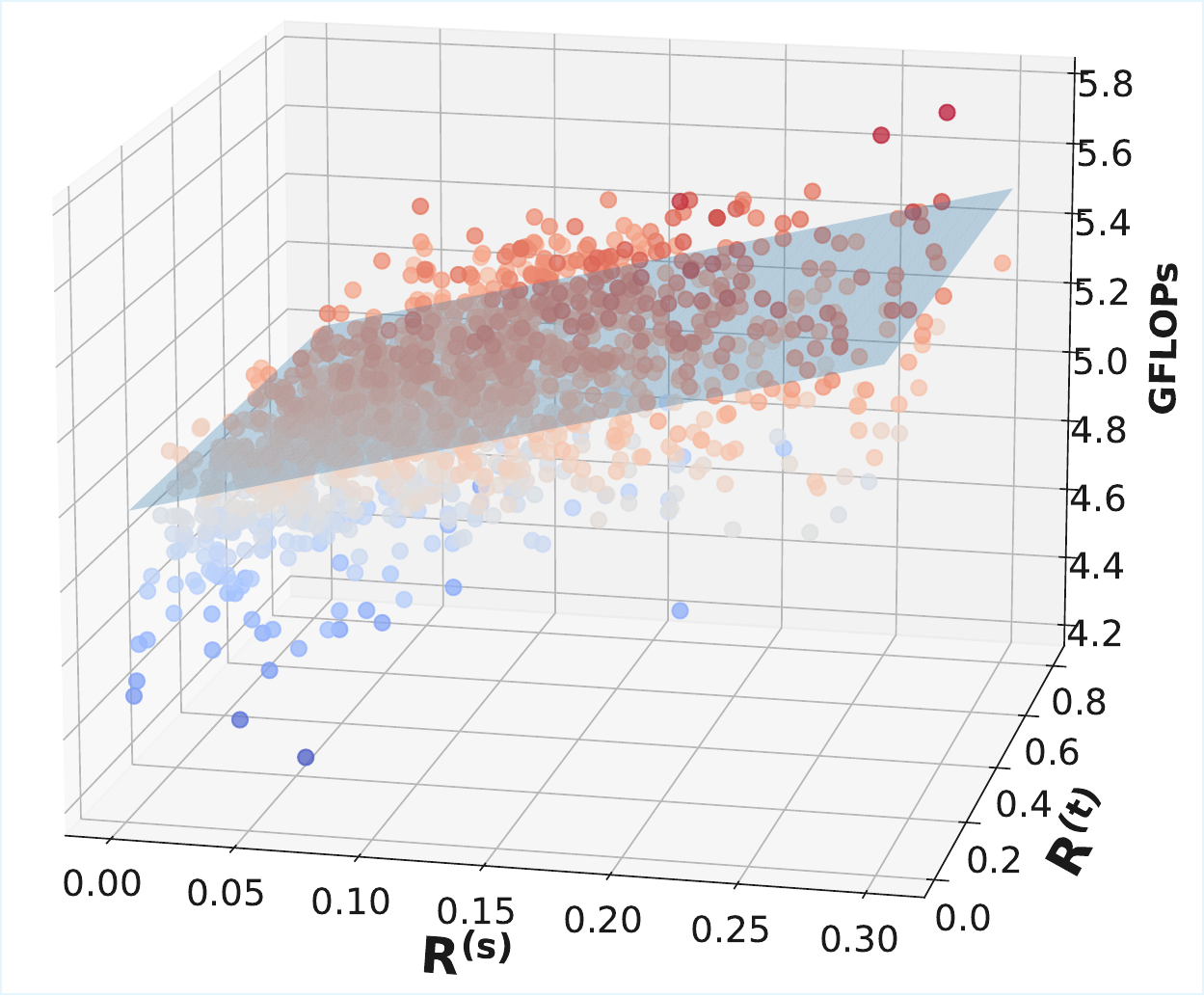}
        \caption{R2+1D ($R^2=0.271$)}
        \label{fig:hist2}
    \end{subfigure}
    \caption{Inference GFLOPs distribution on UCF-101 for different adaptively pruned models w.r.t. spatial and temporal information}
    \label{fig:Distribution}
    \vspace{-0.1in}
\end{figure}
Figure~\ref{fig:Distribution} illustrates the distribution of inference computational overhead (FLOPs) after adaptive pruning of the two baseline 3D CNN architectures: C3D (Figure~\ref{fig:hist1}) and R2+1D (Figure~\ref{fig:hist2}). The computational cost is analyzed with respect to the input information level of each video.

The spatial and temporal information measures, denoted as $\mathbf{R^{s}}$ and $\mathbf{R^{t}}$, quantify the structural complexity within frames and the variability across frames, respectively. Inspired by the information-theoretic formulation in~\cite{shen2006robust} and later used in~\cite{herrmann2020learning}, both metrics are derived from spatio-temporal DCT energy ratios between high- and low-frequency components and are log-normalized as $R = \log\!\left(1 + \frac{E_{\text{high}}}{E_{\text{low}}}\right)$ to stabilize variance and reduce skewness.

\begin{figure*}[ht]
    \centering
    \begin{subfigure}[t]{0.45\linewidth}
        \centering
        \includegraphics[width=\linewidth]{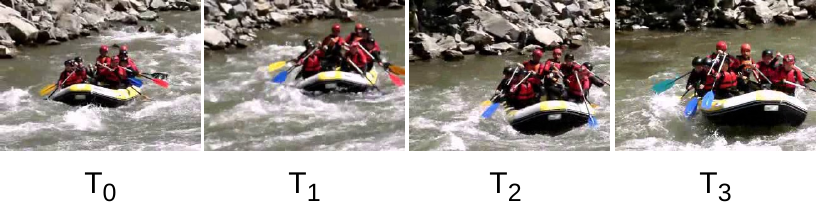}
        \caption{Highest-complexity example: Rafting activity}
        \label{fig:juggling}
    \end{subfigure}
    \hfill
    \begin{subfigure}[t]{0.45\linewidth}
        \centering
        \includegraphics[width=\linewidth]{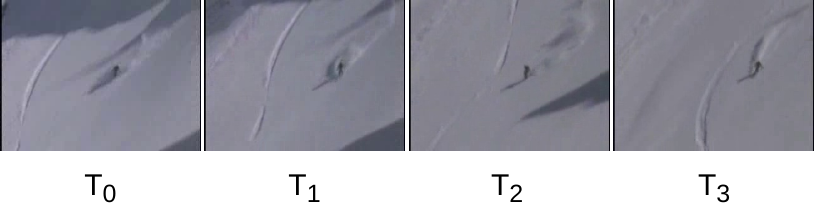}
        \caption{Lowest-complexity example: Skiing activity}
        \label{fig:skiing}
    \end{subfigure}
    
    \caption{Two extrema examples from UCF101 processed with C3D. The left video frames (Rafting) correspond to the maximum computational cost (14.5 GFLOPs), while the right video frames (Skiing) correspond to the minimum computational cost (7.8 GFLOPs).}
    \label{fig:extrema_frames}
\end{figure*}

We model the relationship between intrinsic video information content and the computational effort required during inference. A strong correlation is observed between $\left[ R^{s}, R^{t} \right]$ and FLOPs. For C3D, the multiple correlation coefficient and coefficient of determination are $R = 0.863$ and $R^2 = 0.745$, respectively, indicating that 74.5\% of the variance is explained by input complexity. In contrast, R2+1D exhibits weaker correlation ($R = 0.521$, $R^2 = 0.271$).

These results confirm that the proposed adaptive framework is input-dependent, allocating more computation to videos that are more complex from an information-theoretic perspective. The disparity between C3D and R2+1D stems from architectural differences: R2+1D is a residual network in which skip connections limit the propagation of sparsity across layers, whereas C3D follows a VGG-style architecture without skip connections, allowing sparsity to accumulate and making its cost more sensitive to input complexity.

Figure~\ref{fig:extrema_frames} presents representative frames from the most and least computationally intensive video samples processed by the C3D model. Specifically, Figures~\ref{fig:juggling} and~\ref{fig:skiing} display four downsampled frames from the rafting and skiing activities, respectively. The rafting sequence requires 14.5 GFLOPs for inference, whereas the skiing sequence requires approximately one-third of that computation.

Consistent with Figure~\ref{fig:Distribution}, the rafting video's high frame-to-frame variability and spatial detail drive higher $R^{s}$ and $R^{t}$ levels. Conversely, the skiing video's uniform background and localized motion reduce spatio-temporal complexity, lowering computational demand.

\begin{figure}[h]
    \centering
    \begin{subfigure}{0.41\linewidth}
        \centering
        \includegraphics[width=\linewidth]{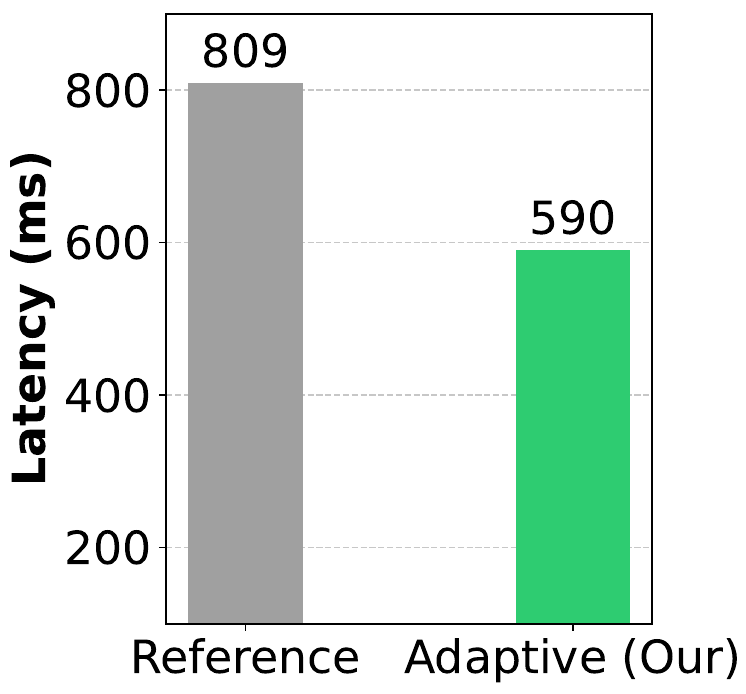}
        \caption{Inference latency}
        \label{fig:latency_jetson}
    \end{subfigure}
    \hfill
    \begin{subfigure}{0.41\linewidth}
        \centering
        \includegraphics[width=\linewidth]{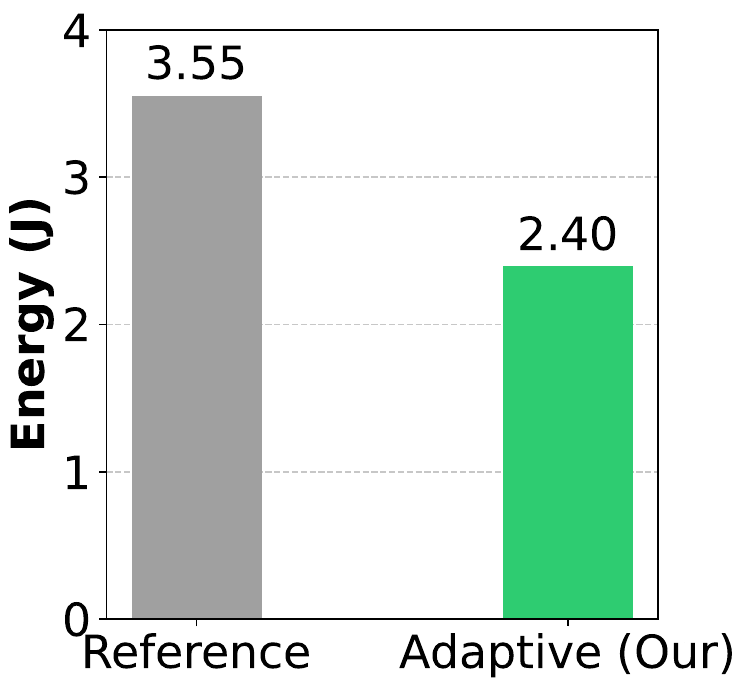}
        \caption{Inference Energy}
        \label{fig:energy_jetson}
    \end{subfigure}
    \caption{R(2+1)D latency and energy on Jetson Nano.}
    \label{fig:jetson_overhead}
    \vspace{-0.15in}
\end{figure}

\begin{figure}
    \centering
    \includegraphics[width=0.9\linewidth]{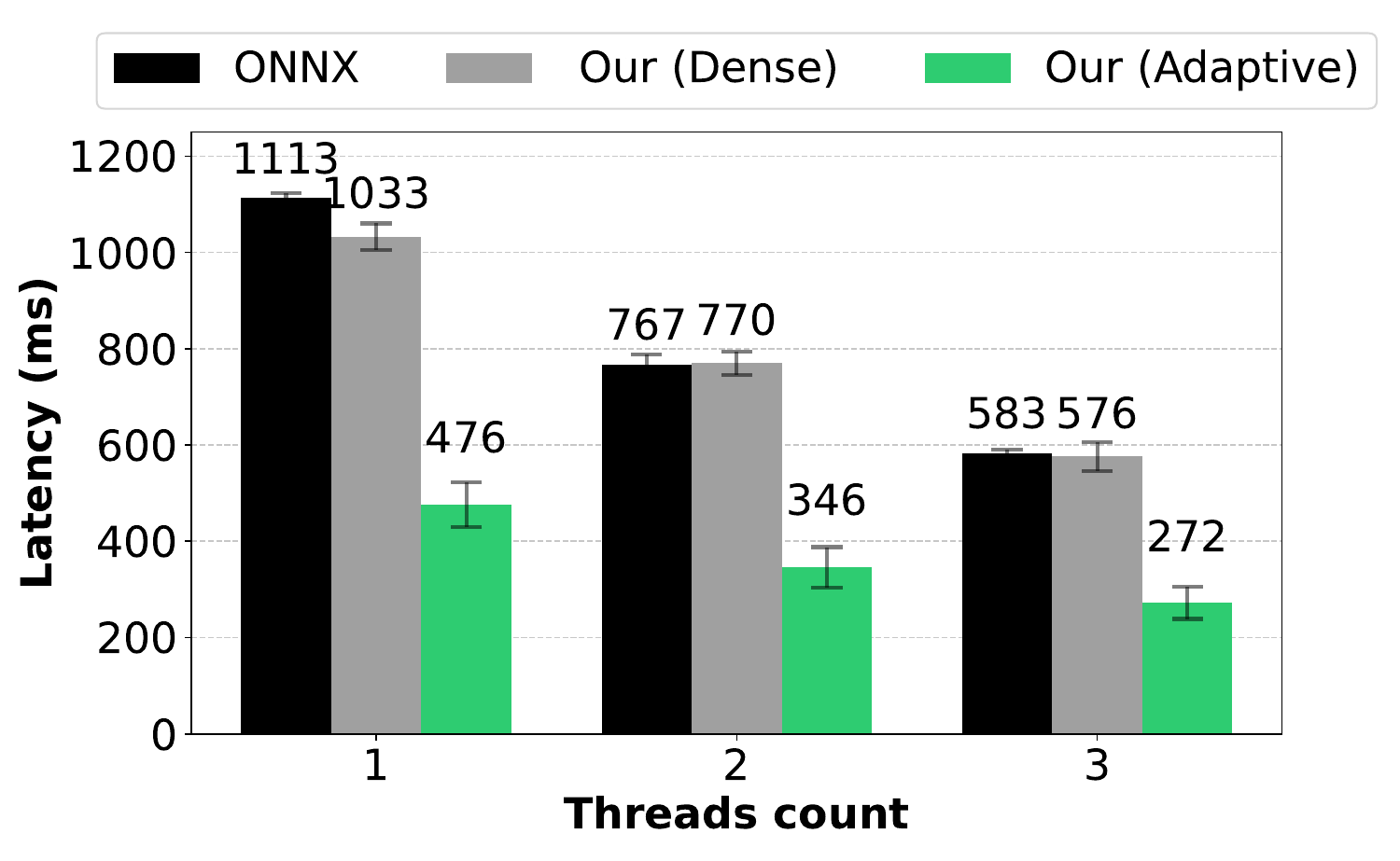}
    \caption{C3D inference latency on CPU (UCF101) vs. baseline.}
    \label{fig:cpu_infer}
    \vspace{-0.2in}
\end{figure}

We evaluate computational efficiency on two edge architectures: an NVIDIA Jetson Nano (4GB GPU) and a Samsung S22 (Qualcomm Snapdragon 8 Gen 1 ARM CPU). To optimize platform-specific constraints—branching penalties on GPUs and vectorization capabilities on CPUs—we employ temporal/channel pruning for the GPU and temporal/spatial pruning for the CPU.GPU Implementation: We implemented R(2+1)D on the Jetson Nano using UCF101. The decomposed convolution structure of R(2+1)D is less memory-intensive, suiting the GPU's restricted RAM. Latency was measured via GPU clock time, and power was recorded using an external PSU (reported as delta over idle). 

As shown in Figures~\ref{fig:latency_jetson} and~\ref{fig:energy_jetson}, our adaptive method achieves a $1.37\times$ latency reduction and $1.47\times$ better energy efficiency over the PyTorch baseline. On the mobile CPU, we developed a custom C3D kernel optimized for ARMv9. We leveraged Neon SIMD instructions and FP16 (Half-Precision) to mitigate memory bottlenecks. Figure~\ref{fig:cpu_infer} compares three implementations: an ONNX baseline (with standard Conv-BN-ReLU fusions), our custom Dense kernel, and our Adaptive Pruning kernel.Across 1, 2, and 3 threads, our Dense kernel matches the ONNX baseline; however, the Adaptive Pruning method delivers a substantial $2.22\times$ average speedup. 
The sub-linear scaling observed with increased thread counts suggests the system is memory-bandwidth bound rather than compute-bound. This indicates that the shared memory bus saturates before the execution units, presenting an opportunity for even more aggressive pruning to further alleviate memory pressure.

\section{Conclusion}
We propose \textit{DANCE}, a fine-grained activation pruning framework for 3D CNNs, enabling energy-efficient inference on edge devices. Using activation variance to guide pruning, the method adapts dynamically to input complexity. It surpasses SOTA pruning approaches in both efficiency and accuracy, even after accounting for the controller overhead. The framework generalizes across datasets and can be extended to other high-cost models such as vision transformers.

\bibliographystyle{ieeenat_fullname}
\bibliography{ashiqur}

\end{document}